\documentclass{article}

\PassOptionsToPackage{numbers,sort&compress}{natbib}
\usepackage[final,sglblindworkshop]{neurips_2026}
\workshoptitle{E-Values: From Statistics to ML}
\usepackage{amsmath,amssymb,amsthm,booktabs,microtype,hyperref,tikz}
\usetikzlibrary{arrows.meta,positioning,calc}

\newtheorem{theorem}{Theorem}
\newtheorem{proposition}{Proposition}
\newtheorem{lemma}{Lemma}

\theoremstyle{definition}
\newtheorem{definition}{Definition}

\theoremstyle{remark}

\newcommand{\E}{\mathbb E}
\newcommand{\F}{\mathcal F}
\newcommand{\R}{\mathbb R}

\newcommand{\fstar}{f^\star}

\title{Prequential E-Values for\\Selected-GP Near-Optimality Certificates}
\author{
  Ami Tavory\\
  Meta Platforms
  \And
  Noa Cohen\\
  Meta Platforms
}
\date{}

\begin{document}
\maketitle

\begin{abstract}
When optimizing an expensive black-box function sequentially, as in
hyperparameter optimization, we may want to stop once the best evaluated value
is certified within $\varepsilon$ of the global optimum. Such a certificate
needs two ingredients: a lower confidence bound for the selected value and an
upper confidence envelope over the domain, typically supplied by a
Gaussian process (GP). GP-UCB-style stopping rules are valid when the kernel and
constants defining this envelope are fixed before the run, but the practical
temptation is to tune the envelope from the same adaptive evaluations and then
certify as if it had been fixed. We use prequential e-values to make this selection
auditable: starting from a predeclared set of fully specified GP/RKHS
envelopes, each candidate is tested by its own one-step-ahead e-process,
contradicted candidates are deleted, and certification uses the largest upper
bound among the survivors. 
With a valid selected-point lower bound and one declared candidate having valid
latent coverage and noise calibration, the rule is anytime-valid.
On a 512-seed noisy RBF stress sweep, it roughly halves false-certification
risk at comparable power versus fit-then-certify. Relative to random fixed GP
precommitment on smooth $d=3,4$ objectives, each additional false certificate
is accompanied by 3.0 and 13.5 additional correct certificates, respectively.
\end{abstract}

\section{Optimization and GP Certificates}

Many ML workflows, including hyperparameter optimization and AutoML, use a
sequential optimizer \citep{snoek2012practicalbo,akiba2019optuna,botorch2020} to
query a noisy function at configurations
$\lambda_1,\lambda_2,\ldots\in\Lambda$. At time $t$, the optimizer returns the
current best queried configuration $\hat\lambda_t$, whose observed score
attains $\max_{t'\le t}Y_{t'}$, where $Y_{t'}$ is a noisy measurement of
$f(\lambda_{t'})$. We are often interested in whether the best configuration
found so far is within $\varepsilon$ of the global optimum:
\begin{equation}
\label{eq:near-optimality-target}
  f(\hat\lambda_t) \ge f^\star-\varepsilon,\qquad
  f^\star=\sup_{\lambda\in\Lambda} f(\lambda).
\end{equation}
To that end, we can certify
\begin{equation}
\label{eq:stop-certificate}
  L_t(\hat\lambda_t) \ge U_t-\varepsilon,
\end{equation}
where
$L_t(\hat\lambda_t)$ is a lower bound on the selected configuration's value, and
$U_t$ is an upper bound on the unobserved global optimum.

Under an observation-model assumption, the lower bound is a standard winner's-curse/post-selection problem caused by choosing the best observed value
\citep{cawleyTalbot2010overfitting,zrnicFithian2025winnerscurse,zhangLeeLei2024winners}
(see Appendix~\ref{app:selected-lower}).
The upper bound requires a further structure assumption: without one, an
arbitrary optimum can occur in any unqueried configuration.
At a fixed prefix $t$, GP-UCB and related Bayesian-optimization stopping rules
use a fixed kernel/RKHS model to build
$U_t=\sup_{\lambda\in\Lambda}[\mu_t(\lambda)+\beta_t\sigma_t(\lambda)]$,
an upper bound from the posterior mean and standard deviation
\citep{rasmussenWilliams2006gp,srinivas2010gpucb,chowdhuryGopalan2017kernelized,wilson2024stoppingBO,wangWangWei2026stoppingBO,berkenkamp2019unknownHyperparams}.
These give valid near-optimality rules through
\eqref{eq:stop-certificate} under a fixed GP assumption. Fixing the GP can make
the certificate brittle, but selecting or tuning its kernel, radius, or
inflation from the same data and then applying the fixed-GP guarantee can
increase false-certification risk.

We address the upper bound by treating GP-envelope selection as a post-selection
certification problem rather than ordinary model fitting, building on the
post-selection advantages of e-values
\citep{cawleyTalbot2010overfitting,xuWangRamdas2024posi}. Before seeing the run, we declare $m$
fully specified GP/RKHS envelope candidates. As the optimizer reveals ordered
observations, each candidate band is challenged by one-step-ahead residuals: it
must bound the next queried configuration before its value is observed. The
residuals are accumulated into one e-process per candidate
\citep{vovkWang2021evalues,ramdas2023gavi}; when a candidate's e-process
crosses $1/\delta_U$, where $\delta_U$ is the predeclared upper-side audit
error budget, that candidate is deleted.

The certificate combines the largest upper bound among the surviving bands
with a selected-point lower confidence bound for $\hat\lambda_t$ (bounded-score
KL in our finite-validation experiments). Using the largest survivor preserves
validity without selecting a favorably tight band. For any fixed declared candidate whose envelope
has all-time upper coverage error at most $\alpha_U$ and whose one-step audit
null is correct, Ville's inequality \citep{ville1939} bounds the probability of
ever deleting that candidate by $\delta_U$. Since the certificate takes the largest survivor,
one such surviving anchor is enough. If the selected-configuration lower bound
fails with probability at most $\alpha_L$, then any certificate issued at any
stopping time $\tau$ satisfies
\[
  f(\hat\lambda_\tau)\ge f^\star-\varepsilon
\]
with probability at least $1 - (\alpha_L+\alpha_U+\delta_U)$.

\paragraph{Contributions.}
We contribute: (1) a
post-selection certificate for global near-optimality under a predeclared set of
GP/RKHS envelope assumptions; (2) a prequential e-value screening rule that
makes data-dependent envelope selection auditable while preserving anytime
validity; and (3) known-optimum experiments showing that this can beat a random
fixed precommitment from the same declared set in smooth, certifiable regimes,
while tightest-survivor and plug-in variants leak risk.

\begin{figure}[t]
\centering
\begin{tikzpicture}[x=1cm,y=1cm]
  \input{figures/system_optimizer_trace_panel.tex}
\end{tikzpicture}
\caption{Grey surface: hidden objective. Colored contours: surviving
candidate GP fits. Black path: optimizer observations. Candidate-status
panels: prequential evidence dots, deletion cutoff, and grey deletion slashes.}
\label{fig:system-diagram}
\end{figure}

\section{GP Upper Bounds and Prequential E-Value Audits}
\label{sec:method}

We construct the upper bound of \eqref{eq:stop-certificate}. Before the optimization,
we declare a set of $m$ fully specified GP/RKHS envelope candidates
$K_1,\ldots,K_m$. Candidate $K_j$ fixes its kernel, kernel parameters, noise or
RKHS-radius constants, and confidence inflation at upper-side level
$\alpha_U$. After the first $t$ evaluations, candidate $K_j$ is updated only
with the prefix observed so far and gives
\[
  u^f_{j,t}(\lambda)=\mu_{j,t}(\lambda)+\beta_{j,t}\sigma_{j,t}(\lambda),
  \qquad
  U_{j,t}=\sup_{\lambda\in\Lambda} u^f_{j,t}(\lambda).
\]
If $K_j$ is a valid envelope assumption for the objective, then $U_{j,t}$ is
the fixed-GP upper bound that can certify the unqueried domain.

Figure~\ref{fig:system-diagram} illustrates the procedure. The true structure
(shown in grey) is unknown. At step $t$ a number $\le m$ of GPs survive, and
each updates its posterior envelope from the observed prefix, with its audit
based on one-step-ahead prediction (colored contours in panels (1) and (3)). Each live
candidate compares the next observation against its envelope, and is eliminated
if that observation exceeds its cutoff (panels (2) and (4)).
Deletion acts as a validity screen, after which the certificate uses the
largest upper bound among the surviving candidates (panel (4)).

\paragraph{Single-candidate e-value.}
Let $\mathcal F_t$ denote the information available after the first $t$
evaluations. Fix one candidate $K_j$. At step $t$, before $Y_t$ is observed,
candidate $j$ forms its latent band using only $\mathcal F_{t-1}$ and uses its
observation-noise model to add a one-step buffer $r_j(q_j)$ satisfying
\[
\begin{aligned}
  \Pr\bigl(
    Y_t>f(\lambda_t)+r_j(q_j)
    \mid \mathcal F_{t-1}
  \bigr)
  &\le q_j .
\end{aligned}
\]
After $Y_t$ is revealed, the violation indicator
\[
\begin{aligned}
  V_{j,t}
  &=
  \mathbf 1\!\bigl(
    Y_t>u^f_{j,t-1}(\lambda_t)+r_j(q_j)
  \bigr)
\end{aligned}
\]
challenges that candidate. Under the on-path null for this candidate,
$\Pr(V_{j,t}=1\mid\mathcal F_{t-1})\le q_j$. A fixed bet
$\eta_j\in[0,1/q_j)$ then makes
\[
  M_{j,t}=1+\eta_j(V_{j,t}-q_j)
\]
a nonnegative one-step e-value increment, since
\[
  \E(M_{j,t}\mid\mathcal F_{t-1})
  =
  1+\eta_j\bigl(\E(V_{j,t}\mid\mathcal F_{t-1})-q_j\bigr)
  \le 1 .
\]
Multiplying these increments gives the candidate's e-process
\[
  E_{j,t}=E_{j,t-1}\bigl(1+\eta_j(V_{j,t}-q_j)\bigr),
  \qquad E_{j,0}=1.
\]
Large values are evidence against the candidate's one-step calibration.
Deletion at $E_{j,t}\ge 1/\delta_U$ is therefore an e-value test, and Ville's
inequality gives deletion probability at most $\delta_U$ for any candidate
satisfying the on-path null, uniformly over stopping times.

\paragraph{Combination.}
We run the one-candidate audit separately for each declared $K_j$ and let
$A_t$ be the candidates not deleted by their e-processes. The certificate uses
the largest surviving upper bound,
\begin{equation}
\label{eq:survivor-envelope}
  U_t^{\rm surv}=\max_{j\in A_t} U_{j,t},
  \qquad \max\varnothing:=+\infty.
\end{equation}
Thus the procedure abstains if every candidate is deleted.
This conservative choice avoids selecting a lucky under-covering survivor;
choosing the tightest fitted GP after seeing the same data would condition on
success and then reuse the fixed-GP guarantee.
Fix any predeclared anchor $j^\star$ whose latent band covers $f$ uniformly
over the domain and time with failure probability at most $\alpha_U$. On this
event, $U_{j^\star,t}\ge f^\star$ and
\[
  V_{j^\star,t}
  \le
  \mathbf 1\!\bigl(
    Y_t>f(\lambda_t)+r_{j^\star}(q_{j^\star})
  \bigr).
\]
Replacing $u^f_{j^\star,t-1}(\lambda_t)$ by $f(\lambda_t)$ in every violation
factor defines an unconditional comparison e-process that, on the coverage
event, pathwise dominates $E_{j^\star,t}$. Ville therefore bounds deletion of the anchor before any
coverage failure by $\delta_U$. With probability at least
$1-\alpha_U-\delta_U$, the anchor both
covers the optimum and is never deleted, so
$U_t^{\rm surv}\ge U_{j^\star,t}\ge f^\star$ at every step. We stop only when
\eqref{eq:stop-certificate} holds with $U_t=U_t^{\rm surv}$. Combining this
upper side with the lower-side error $\alpha_L$ gives
\begin{equation}
\label{eq:alpha-allocation}
  \Pr(\exists t':\hbox{ false }\varepsilon\hbox{-certificate at }t')
  \le \alpha_L+\alpha_U+\delta_U .
\end{equation}
The number of candidates $m$ affects computation and survivor looseness, not
the risk budget.

\section{Experiments}

We use known-optimum analytic objectives so each stop can be scored by the
power/risk pair $a/b$, where $a=\Pr(\mathrm{certify\ and\ correct})$ and
$b=\Pr(\mathrm{certify\ and\ wrong})$. All methods use the same optimizer
runs and selected-configuration lower bound; only the upper certificate
changes. The experiment includes
finite-validation noise and GP-kernel objectives in low dimension. The
experiment protocol is in Appendix~\ref{app:canonical}, and a reference
implementation is available on GitHub\footnote{Reference implementation:
\url{https://github.com/atavory/gp_evalues}.}. For interpretation,
Figure~\ref{fig:fixed-fit-qcut-stack} bins rows by
$\rho=64^{-1/d}/\ell_{\rm rms}$, an analysis-only proxy for how coarse the
early optimizer run is relative to the function scale.

\paragraph{Random fixed precommitment.}
The fixed baseline samples one fully specified envelope from the same
predeclared set before the run and never adapts it. We compare the change in
correct-certification probability to the change in false-certification
probability. Aggregated over the $d=3$ and $d=4$ fixed-precommitment experiments,
the audited certificate gives
3.0 and 13.5 additional correct certificates per additional false certificate
in $d=3$ and $d=4$, respectively. The $\rho$ breakdown shows where the gain comes from. In the
low and middle tertiles, the audit turns many fixed abstentions into
certificates; in the hardest tertile, both methods mostly abstain.

\begin{center}
\refstepcounter{figure}\label{fig:fixed-fit-qcut-stack}
\includegraphics[width=0.98\linewidth]{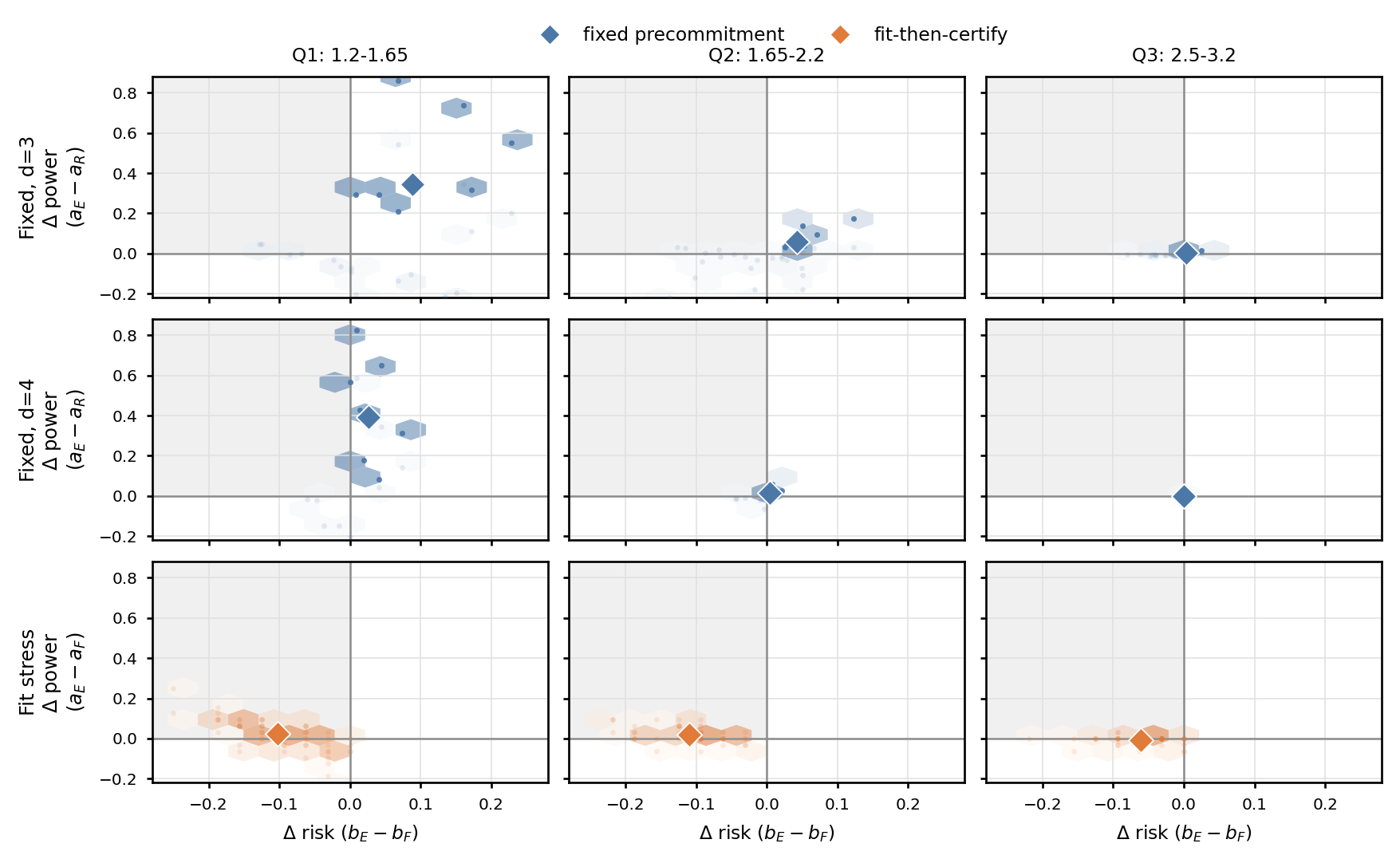}
\vspace{-0.35em}
{\small\textbf{Figure~\thefigure.}
Paired advantage of the e-value audited certificate (E) across row-wise $\rho$
tertiles. Upper block: comparison with random fixed precommitment (R), for
$d=3,4$. Lower block: comparison with fit-then-certify (F) on the 512-seed RBF
stress sweep. Blue marks fixed comparisons and orange marks fit-then-certify;
top-left favors E.}
\end{center}

\paragraph{Fit-then-certify.}
The fit-then-certify diagnostic fits or selects the GP envelope from the same
history and then certifies as if it had been fixed in advance. Here the natural
comparison is false-certification risk at matched certification power. On the
512-seed RBF length-scale stress sweep, the audited and fit-then-certify
power/risk pairs are $0.280/0.091$ and $0.267/0.183$. Their risk-per-correct
ratios are $0.325$ and $0.685$, respectively, giving a ratio of $0.47$.
Across $\rho$ tertiles, the
risk reduction persists in every bin; the power tradeoff appears mainly in the
hardest tertile. Appendix~\ref{app:fit-then-certify-risk} explains why this
risk is upper-side undercoverage.

\section{Discussion and limitations}

Prequential e-values let near-optimality certificates use a data-selected
subset of a predeclared GP-envelope family while preserving validity. The
search policy is unchanged; the contribution is post-selection accounting for
the certificate, under the condition that the declared envelope family contains
a valid member. We handle a finite family; continuous hyperparameter families
would require additional uniform or mixture accounting.

The main limitation is geometric; Appendix~\ref{app:dimension-resolution},
including Theorem~\ref{thm:app-imposs}, formalizes this barrier. Both the experiments and GP/RKHS theory
reflect the $n^{-1/d}$ fill-distance barrier: with tens or hundreds of
evaluations, global upper envelopes become weak beyond a few active
hyperparameters. In that regime the valid outcome is often abstention unless
one declares lower-dimensional structure or accepts model-trusting
extrapolation.

\clearpage
\bibliographystyle{plainnat}
\bibliography{references}

\clearpage
\appendix
\section{Selected-Configuration Lower Bounds}
\label{app:selected-lower}

Choosing the returned configuration from noisy observed scores is a classic
winner's curse problem for noisy comparison
\citep{cawleyTalbot2010overfitting,zrnicFithian2025winnerscurse,zhangLeeLei2024winners}.
The certificate only needs a one-sided lower bound for each queried
configuration: for any fixed $\lambda_{t'}$ and level $\alpha$, a statistic
$L_{t'}(\alpha)$ with
\[
  \Pr(L_{t'}(\alpha)>f(\lambda_{t'}))\le\alpha .
\]
Selection then costs only a union bound over configurations that could be
returned.

\paragraph{Bounded-score lower bound.}
Suppose each queried configuration has one observed mean score
\[
  Y_{t'}=\frac{1}{n_{\rm val}}\sum_{s=1}^{n_{\rm val}}Z_{t',s},
  \qquad Z_{t',s}\in[0,1],\quad \E Z_{t',s}=f(\lambda_{t'}),
\]
with conditionally independent validation draws. Bernoulli accuracy scores are
the special case used in our finite-validation experiments.
For $y,q\in[0,1]$, write
\[
  {\rm kl}(y,q)=y\log\frac{y}{q}+(1-y)\log\frac{1-y}{1-q}.
\]

\begin{proposition}[Bounded-score KL lower bound]
\label{prop:app-kl}
For a fixed configuration with mean $p$, define
\[
  L_{\rm KL}(Y,n_{\rm val},\alpha)
  =
  \inf\{q\le Y:n_{\rm val}{\rm kl}(Y,q)\le \log(1/\alpha)\}.
\]
Then $\Pr(L_{\rm KL}(Y,n_{\rm val},\alpha)>p)\le\alpha$.
\end{proposition}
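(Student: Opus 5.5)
The plan is to reduce the event $\{L_{\rm KL}>p\}$ to an upper-tail event for $Y$ and then apply the Chernoff--Hoeffding KL bound for bounded variables. The starting point is convexity of $q\mapsto{\rm kl}(Y,q)$ on $(0,1)$. On $q\le Y$ this function is non-increasing, equals $0$ at $q=Y$, and tends to $+\infty$ as $q\downarrow 0$ when $Y>0$. So the set $\{q\le Y: n_{\rm val}{\rm kl}(Y,q)\le\log(1/\alpha)\}$ is an interval $[L_{\rm KL},Y]$.

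From this interval structure, for $p<Y$ we get the equivalence
\[
  L_{\rm KL}>p \iff n_{\rm val}{\rm kl}(Y,p)>\log(1/\alpha).
\]
If $p\ge Y$, then $L_{\rm KL}\le Y\le p$, so the event cannot occur. The edge cases $Y=0$ and $p\in\{0,1\}$ are handled by continuity and the conventions $0\log 0=0$ and ${\rm kl}(y,0)=+\infty$ for $y>0$. Hence
\[
  \{L_{\rm KL}>p\}\subseteq\{Y>p,\ {\rm kl}(Y,p)>\log(1/\alpha)/n_{\rm val}\}.
\]
Because ${\rm kl}(\cdot,p)$ is increasing on $[p,1]$, this event is contained in $\{Y\ge y^\ast\}$. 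Here $y^\ast$ is the smallest $y\ge p$ with ${\rm kl}(y,p)\ge\log(1/\alpha)/n_{\rm val}$, which exists by continuity. If no such $y\le 1$ exists, the event is empty.

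It then suffices to show
\[
  \Pr(Y\ge y)\le\exp\bigl(-n_{\rm val}{\rm kl}(y,p)\bigr)\quad\text{for } y\ge p.
\]
I would prove this by Chernoff's method. For bounded $Z\in[0,1]$ with mean $p$, convexity of $z\mapsto e^{\lambda z}$ gives $\E e^{\lambda Z}\le 1-p+pe^\lambda$, since the Bernoulli$(p)$ variable dominates in the convex order. Conditional independence of the draws lets the moment generating functions multiply, via a tower argument over $s$. This gives $\Pr(Y\ge y)\le\exp\{-n_{\rm val}[\lambda y-\log(1-p+pe^\lambda)]\}$. Optimizing at $e^\lambda=y(1-p)/(p(1-y))$ recovers exactly ${\rm kl}(y,p)$. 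Applying the bound at $y=y^\ast$ gives probability at most $\exp(-\log(1/\alpha))=\alpha$.

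The main obstacle is the boundary bookkeeping: the infimum versus strict inequality, possible ties at $y^\ast$, and $p$ or $Y$ at $0$ or $1$. I would handle it by bounding $\Pr(Y\ge y)$ with $y\downarrow y^\ast$ from above when needed, using right-continuity of ${\rm kl}(y,p)$. The case $p=0$ is trivial because then $Y=0$ almost surely. For $p=1$ the event $L_{\rm KL}>p$ is impossible. Everything else is the standard Chernoff--Hoeffding computation.
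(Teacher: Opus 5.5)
Your proposal is correct and follows essentially the same route as the paper. The paper likewise contains $\{L_{\rm KL}>p\}$ in $\{Y>p,\ n_{\rm val}{\rm kl}(Y,p)>\log(1/\alpha)\}\subseteq\{Y\ge y_\alpha\}$ using monotonicity of ${\rm kl}(\cdot,p)$ on $[p,1]$, and then applies the Chernoff--Hoeffding KL tail bound. The only differences are that you derive that bound yourself (Bernoulli convex-order domination of the MGF and the optimal choice of $\lambda$) and treat the boundary cases explicitly, where the paper cites the bound and leaves those details implicit.
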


\begin{proof}
The failure event can occur only on the upper tail of the bounded mean:
\[
  \{L_{\rm KL}(Y,n_{\rm val},\alpha)>p\}
  \subseteq
  \{Y>p,\ n_{\rm val}{\rm kl}(Y,p)>\log(1/\alpha)\}.
\]
For any $y\ge p$, Hoeffding's bounded-variable Chernoff bound gives
\[
  \Pr(Y\ge y)
  \le
  \exp(-n_{\rm val}{\rm kl}(y,p)).
\]
Let
\[
  y_\alpha=\inf\{y\ge p:n_{\rm val}{\rm kl}(y,p)\ge\log(1/\alpha)\}.
\]
Since $y\mapsto{\rm kl}(y,p)$ is increasing on $[p,1]$, the failure event is
contained in $\{Y\ge y_\alpha\}$, and therefore
\[
  \Pr(L_{\rm KL}(Y,n_{\rm val},\alpha)>p)
  \le
  \Pr(Y\ge y_\alpha)
  \le
  \alpha .
\]
\end{proof}

\begin{proposition}[Anytime selection accounting]
\label{prop:app-selected}
Fix weights $w_{t'}\ge0$ with
$\sum_{t'\ge1}w_{t'}\le1$. For each queried configuration, let
$L_{t'}(\alpha_Lw_{t'})$ be any lower bound satisfying
$\Pr(L_{t'}(\alpha_Lw_{t'})>f(\lambda_{t'}))\le\alpha_Lw_{t'}$; in the
bounded-score example, take
$L_{t'}(\alpha_Lw_{t'})=L_{\rm KL}(Y_{t'},n_{\rm val},\alpha_Lw_{t'})$. Then
\[
  \Pr(\exists t'\ge1:L_{t'}(\alpha_Lw_{t'})>f(\lambda_{t'}))\le\alpha_L .
\]
\end{proposition}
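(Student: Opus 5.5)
The plan is a countable union bound over the query index $t'$, with care about the fact that $\lambda_{t'}$ is chosen adaptively. Write $B_{t'}=\{L_{t'}(\alpha_Lw_{t'})>f(\lambda_{t'})\}$. Then $\{\exists t'\ge1: B_{t'}\}=\bigcup_{t'\ge1}B_{t'}$, so by countable subadditivity
\[
  \Pr\Bigl(\bigcup_{t'\ge1}B_{t'}\Bigr)\le\sum_{t'\ge1}\Pr(B_{t'})\le\sum_{t'\ge1}\alpha_Lw_{t'}\le\alpha_L .
\]
No Ville-type martingale argument is needed. The weights already spread the budget over all possibly infinitely many queries, and the event is a union over the whole run, so the bound holds simultaneously over time and therefore at any stopping time.

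The one step needing care is justifying $\Pr(B_{t'})\le\alpha_Lw_{t'}$ when $\lambda_{t'}$ is random and $\mathcal F_{t'-1}$-measurable rather than fixed. I would condition on $\mathcal F_{t'-1}$. Given this history, $\lambda_{t'}$ is fixed, and in the bounded-score case the draws $Z_{t',s}$ are conditionally independent in $[0,1]$ with conditional mean $f(\lambda_{t'})$. Proposition~\ref{prop:app-kl} therefore applies conditionally, with the deterministic level $\alpha_Lw_{t'}$, and gives $\Pr(B_{t'}\mid\mathcal F_{t'-1})\le\alpha_Lw_{t'}$ almost surely. Taking expectations gives the unconditional bound. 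For a general lower bound, I would read the hypothesis in the same conditional (fixed-configuration) sense, since the statement quantifies it per queried configuration.

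The weights $w_{t'}$ must be predeclared constants, or at least $\mathcal F_{t'-1}$-measurable with a deterministic sum bound. Data-dependent levels chosen after seeing $Y_{t'}$ would break the per-index bound.

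I do not expect a real obstacle here. The only subtlety is the conditioning argument above, namely checking that the Chernoff bound in Proposition~\ref{prop:app-kl} uses only conditional independence and the conditional mean, which it does. Finally, note that the returned $\hat\lambda_\tau$ is always some queried $\lambda_{t'}$. On the complement of the union, $L_{t'}\le f(\lambda_{t'})$ therefore holds for whichever configuration is selected, which is how this result enters the $\alpha_L$ term of \eqref{eq:alpha-allocation}.
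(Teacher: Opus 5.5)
Your proposal is correct and follows the same route as the paper. The paper gives no separate proof; it only says that selection ``costs only a union bound over configurations that could be returned,'' and your countable union bound with $\sum_{t'}\alpha_Lw_{t'}\le\alpha_L$ is exactly that argument. Your conditioning on $\mathcal F_{t'-1}$, which justifies the per-index bound when $\lambda_{t'}$ is chosen adaptively, is a careful addition that the paper leaves implicit.
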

Consequently, at any finite stopping time $t$, any data-dependent returned
configuration $\hat\lambda_t\in\{\lambda_1,\ldots,\lambda_t\}$ is covered on
the simultaneous event. A finite predeclared horizon is recovered by assigning
zero weight after that horizon.

The fixed-configuration guarantee requires an observation assumption for a
nontrivial finite-sample lower confidence bound. The bounded-score KL bound is
the instantiation used in our experiments; other metrics require their
corresponding one-sided lower bounds.

\section{GP Upper Bounds and Prequential Audits}
\label{app:accounting}

Let $\Lambda\subset\R^d$ be compact, let $f:\Lambda\to\R$ be the latent
objective, and let $\fstar=\sup_{\lambda\in\Lambda}f(\lambda)$. A sequential
policy chooses configurations $\lambda_t$ and observes noisy scores $Y_t$.
Let $\F_t$ be the information available after the first $t$ evaluations.

\begin{definition}[Envelope candidate]
\label{def:app-envelope-candidate}
A candidate $K_j$ is fixed before the run. It specifies a kernel and its
parameters, an RKHS-radius or prior constant, an observation-noise tail bound,
and an inflation schedule. After the first $t$ observations it gives a
predictable latent envelope
\[
  u^f_{j,t}(\lambda)=\mu_{j,t}(\lambda)+\beta_{j,t}\sigma_{j,t}(\lambda),
  \qquad
  U_{j,t}=\sup_{\lambda\in\Lambda}u^f_{j,t}(\lambda).
\]
For its one-step audit at step $t$, it also gives the noisy-score threshold
\[
  c^Y_{j,t}
  =
  u^f_{j,t-1}(\lambda_t)+r_j(q_j),
\]
where $r_j(q_j)$ is the declared noise quantile. The usual GP-UCB/RKHS choices
of $\beta_{j,t}$ give all-time latent coverage under their fixed assumptions
\citep{srinivas2010gpucb,chowdhuryGopalan2017kernelized}.
\end{definition}

\paragraph{Single-candidate e-value.}
For one fixed candidate, the audit only uses one-step claims made before the
next response is observed. The following two lemmas are the e-value part of the
construction.

\begin{lemma}[Pathwise audit domination]
\label{lem:app-coverage-audit-null}
Suppose candidate $j$ has latent upper coverage
\[
  \mathcal C_j
  :=
  \{\forall t,\forall\lambda\in\Lambda:
  u^f_{j,t}(\lambda)\ge f(\lambda)\},
  \qquad
  \Pr(\mathcal C_j^c)\le\alpha_U,
\]
and its noise quantile satisfies
\[
  \Pr\bigl(Y_t>f(\lambda_t)+r_j(q_j)\mid\F_{t-1}\bigr)
  \le q_j .
\]
Define the observable violation and an unobservable comparison violation by
\[
  V_{j,t}
  =
  \mathbf 1\!\bigl(Y_t>c^Y_{j,t}\bigr),
  \qquad
  \widetilde V_{j,t}
  =
  \mathbf 1\!\bigl(Y_t>f(\lambda_t)+r_j(q_j)\bigr).
\]
Then $\Pr(\widetilde V_{j,t}=1\mid\F_{t-1})\le q_j$, and on
$\mathcal C_j$, $V_{j,t}\le\widetilde V_{j,t}$ for every $t$.
\end{lemma}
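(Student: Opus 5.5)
The lemma has two parts, and each follows almost directly from the definitions.

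\emph{First claim.} Here $\widetilde V_{j,t}$ is the indicator of the event $\{Y_t>f(\lambda_t)+r_j(q_j)\}$. The policy chooses $\lambda_t$ using only $\F_{t-1}$, so $\lambda_t$ is $\F_{t-1}$-measurable. The candidate's buffer $r_j(q_j)$ is declared before the run. Hence
\[
\Pr(\widetilde V_{j,t}=1\mid\F_{t-1})=\Pr\bigl(Y_t>f(\lambda_t)+r_j(q_j)\mid\F_{t-1}\bigr)\le q_j,
\]
which is exactly the assumed noise-quantile condition. No further argument is needed. I would only note explicitly that $f$ is a fixed deterministic function and that $\lambda_t$ is predictable, so the conditional statement is well posed.

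\emph{Second claim.} I would fix an outcome in $\mathcal C_j$ and an arbitrary $t\ge1$. By definition of $\mathcal C_j$, the inequality $u^f_{j,s}(\lambda)\ge f(\lambda)$ holds for every $s$ and every $\lambda\in\Lambda$. I would apply it with $s=t-1$ and $\lambda=\lambda_t\in\Lambda$, which gives $u^f_{j,t-1}(\lambda_t)\ge f(\lambda_t)$. Adding $r_j(q_j)$ to both sides gives $c^Y_{j,t}\ge f(\lambda_t)+r_j(q_j)$.

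It follows that $\{Y_t>c^Y_{j,t}\}\subseteq\{Y_t>f(\lambda_t)+r_j(q_j)\}$, that is, $V_{j,t}\le\widetilde V_{j,t}$. Since $t$ was arbitrary and the outcome was an arbitrary point of $\mathcal C_j$, the domination holds pathwise for all $t$ simultaneously on $\mathcal C_j$.

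\emph{Main obstacle.} The only delicate point is indexing. The quantifier in $\mathcal C_j$ must include time $t-1$, in particular $t-1=0$, i.e.\ the prior envelope $u^f_{j,0}$. I would read ``$\forall t$'' as ranging over $t\ge0$, so that $\mathcal C_j$ covers the band used at the first audit step. Apart from this, and the predictability of $\lambda_t$ already used in the first claim, the proof is purely set inclusion.
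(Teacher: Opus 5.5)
Your proof is correct and matches the paper's argument: the first claim is the assumed noise-quantile condition restated, and the second follows by applying coverage at time $t-1$ and point $\lambda_t$ to get $c^Y_{j,t}\ge f(\lambda_t)+r_j(q_j)$. Your remark that the quantifier in $\mathcal C_j$ must include $t=0$ (the prior envelope used at the first audit step) is a fair clarification that the paper leaves implicit.
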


\begin{proof}
\[
  \Pr(\widetilde V_{j,t}=1\mid\F_{t-1})\le q_j,
  \qquad
  \mathcal C_j
  \Longrightarrow
  c^Y_{j,t}\ge f(\lambda_t)+r_j(q_j)
  \Longrightarrow
  V_{j,t}\le\widetilde V_{j,t}.
\]
\end{proof}

\begin{lemma}[Prequential envelope e-process]
\label{prop:app-preq}
For candidate $j$, fix a bet $\eta_j\in[0,1/q_j)$ and define
\[
  E_{j,t}=E_{j,t-1}\bigl(1+\eta_j(V_{j,t}-q_j)\bigr),
  \qquad E_{j,0}=1.
\]
Let $\widetilde E_{j,t}$ be the same product with $\widetilde V_{j,t}$ in
place of $V_{j,t}$. Then $\widetilde E_{j,t}$ is a nonnegative e-process,
$E_{j,t}\le\widetilde E_{j,t}$ on $\mathcal C_j$, and
\[
  \Pr\!\left(
    \mathcal C_j\cap
    \left\{\sup_{t\ge0}E_{j,t}\ge1/\delta_U\right\}
  \right)
  \le\delta_U .
\]
\end{lemma}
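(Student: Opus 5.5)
The proof has three steps: show that $\widetilde E_{j,t}$ is a nonnegative supermartingale, compare it pathwise with $E_{j,t}$ on $\mathcal C_j$, and apply Ville's inequality to $\widetilde E_{j,t}$.

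\emph{Step 1: $\widetilde E_{j,t}$ is a nonnegative supermartingale.} Each factor $\widetilde M_{j,t}=1+\eta_j(\widetilde V_{j,t}-q_j)$ lies in $\{1-\eta_jq_j,\,1+\eta_j(1-q_j)\}$. Since $\eta_j<1/q_j$, both values are strictly positive, so $\widetilde E_{j,t}>0$ for all $t$. Conditioning on $\F_{t-1}$ and using Lemma~\ref{lem:app-coverage-audit-null}, which gives $\Pr(\widetilde V_{j,t}=1\mid\F_{t-1})\le q_j$, we get
\[
\E(\widetilde M_{j,t}\mid\F_{t-1})=1+\eta_j\bigl(\Pr(\widetilde V_{j,t}=1\mid\F_{t-1})-q_j\bigr)\le1 .
\]
The partial product $\widetilde E_{j,t-1}$ is $\F_{t-1}$-measurable, so $\E(\widetilde E_{j,t}\mid\F_{t-1})\le\widetilde E_{j,t-1}$. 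Hence $\widetilde E_{j,t}$ is a nonnegative supermartingale with $\widetilde E_{j,0}=1$. By optional stopping, $\E\widetilde E_{j,\tau}\le1$ at every stopping time $\tau$, so it is an e-process. One point needs care: $\widetilde V_{j,t}$ depends on the unobserved $f$. This is harmless, because $f$ is a fixed (non-random) function and $\widetilde V_{j,t}$ is measurable with respect to $\F_t$, which contains $\lambda_t$ and $Y_t$.

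\emph{Step 2: pathwise comparison on $\mathcal C_j$.} Lemma~\ref{lem:app-coverage-audit-null} gives $V_{j,t}\le\widetilde V_{j,t}$ on $\mathcal C_j$ for every $t$. Because $\eta_j\ge0$, the map $v\mapsto1+\eta_j(v-q_j)$ is nondecreasing, so each factor of $E_{j,t}$ is at most the corresponding factor of $\widetilde E_{j,t}$. All factors are positive, so induction on $t$ gives $E_{j,t}\le\widetilde E_{j,t}$ for all $t$ on $\mathcal C_j$. It matters that $\mathcal C_j$ is a single event that is uniform over $t$: one path in $\mathcal C_j$ has the inequality at every step simultaneously.

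\emph{Step 3: Ville's inequality.} The event $\mathcal C_j\cap\{\sup_tE_{j,t}\ge1/\delta_U\}$ is contained in $\{\sup_t\widetilde E_{j,t}\ge1/\delta_U\}$. Ville's inequality for the nonnegative supermartingale $\widetilde E_{j,t}$ bounds the probability of the latter by $\delta_U\,\widetilde E_{j,0}=\delta_U$. The main technical point is Step 1, namely that the comparison process is a genuine supermartingale even though it is unobservable. Its justification rests entirely on the conditional noise bound being stated with respect to $\F_{t-1}$, and on $\lambda_t$ being $\F_{t-1}$-predictable.
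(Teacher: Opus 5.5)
Your proposal is correct and follows the paper's proof step for step. You show $\widetilde E_{j,t}$ is a nonnegative supermartingale from the positive factors and the conditional-mean bound, get $E_{j,t}\le\widetilde E_{j,t}$ on $\mathcal C_j$ by factor-wise domination, and finish with Ville's inequality. Your remarks on measurability of the unobservable comparison process (treating $f$ as fixed) and on positivity being needed for the product comparison make explicit what the paper leaves implicit.
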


\begin{proof}
\[
  1+\eta_j(\widetilde V_{j,t}-q_j)
  \ge 1-\eta_jq_j>0,
  \qquad
  \E\bigl(1+\eta_j(\widetilde V_{j,t}-q_j)\mid\F_{t-1}\bigr)
  =
  1+\eta_j\bigl(\E(\widetilde V_{j,t}\mid\F_{t-1})-q_j\bigr)
  \le 1 .
\]
Hence $\widetilde E_{j,t}$ is a nonnegative supermartingale, and on
$\mathcal C_j$,
\[
  E_{j,t}
  =
  \prod_{s=1}^t\!\bigl(1+\eta_j(V_{j,s}-q_j)\bigr)
  \le
  \prod_{s=1}^t\!\bigl(1+\eta_j(\widetilde V_{j,s}-q_j)\bigr)
  =
  \widetilde E_{j,t}.
\]
Therefore, by Ville's inequality \citep{ville1939},
\[
  \Pr\!\left(
    \mathcal C_j\cap
    \left\{\sup_{t\ge0}E_{j,t}\ge1/\delta_U\right\}
  \right)
  \le
  \Pr\!\left(\sup_{t\ge0}\widetilde E_{j,t}\ge1/\delta_U\right)
  \le\delta_U .
\]
\end{proof}

\paragraph{Combination.}
Run the one-candidate audit separately for every declared candidate. The
certificate then combines survivors by the flat maximum.

\begin{theorem}[Flat survivor certificate]
\label{thm:app-worst-surviving}
Let $A_t$ be the candidates not deleted by the prequential audit and define
\[
  U_t^{\rm surv}=\max_{j\in A_t}U_{j,t}.
\]
If $A_t=\varnothing$, define $U_t^{\rm surv}=+\infty$, so the rule abstains.
Assume some fixed predeclared anchor $j^\star$ satisfies the coverage and
noise-buffer conditions in Lemma~\ref{lem:app-coverage-audit-null}. If the
selected-configuration lower side has error at most $\alpha_L$, then the rule
\[
  L_t(\hat\lambda_t)\ge U_t^{\rm surv}-\varepsilon
\]
satisfies
\[
  \Pr(\exists t':\hbox{ false }\varepsilon\hbox{-certificate at }t')
  \le \alpha_L+\alpha_U+\delta_U .
\]
\end{theorem}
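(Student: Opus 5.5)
The proof will be a union bound over three bad events, after showing that outside them no false certificate can ever be issued. Define the lower-side failure event $B_L=\{\exists t': L_{t'}(\hat\lambda_{t'})>f(\hat\lambda_{t'})\}$. We bound it by $\alpha_L$ using Proposition~\ref{prop:app-selected}. On its complement, every queried configuration is simultaneously covered. Since $\hat\lambda_t\in\{\lambda_1,\ldots,\lambda_t\}$, the returned configuration at every $t$, including any stopping time, satisfies $L_t(\hat\lambda_t)\le f(\hat\lambda_t)$. One point needs care. If the lower bound attached to $\hat\lambda_t$ is indexed by the query time of that configuration, then $L_t(\hat\lambda_t)$ means $L_{t''}(\alpha_L w_{t''})$ for the $t''$ with $\lambda_{t''}=\hat\lambda_t$, and the simultaneous event still covers it.

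For the upper side, take the anchor $j^\star$ and the coverage event $\mathcal C_{j^\star}$, with $\Pr(\mathcal C_{j^\star}^c)\le\alpha_U$. Let $D=\{\sup_{t\ge0}E_{j^\star,t}\ge1/\delta_U\}$ be the event that the anchor is ever deleted. Lemma~\ref{prop:app-preq} gives $\Pr(\mathcal C_{j^\star}\cap D)\le\delta_U$, so the good upper event $G_U=\mathcal C_{j^\star}\cap D^c$ satisfies $\Pr(G_U^c)\le\Pr(\mathcal C_{j^\star}^c)+\Pr(\mathcal C_{j^\star}\cap D)\le\alpha_U+\delta_U$.

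On $G_U$ the following holds at every $t$. First, $j^\star\in A_t$, because deletion is permanent and triggered exactly when $E_{j^\star,t}\ge1/\delta_U$, which never happens on $D^c$. Second, coverage gives $U_{j^\star,t}=\sup_\lambda u^f_{j^\star,t}(\lambda)\ge\sup_\lambda f(\lambda)=\fstar$. Together these give $U_t^{\rm surv}\ge U_{j^\star,t}\ge\fstar$. The fact that $A_t\ni j^\star$ is nonempty also rules out the $+\infty$ convention, though that case would only cause abstention anyway.

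Now combine on $B_L^c\cap G_U$. Suppose a certificate is issued at some $t'$, so $L_{t'}(\hat\lambda_{t'})\ge U_{t'}^{\rm surv}-\varepsilon$. Then
\[
f(\hat\lambda_{t'})\ge L_{t'}(\hat\lambda_{t'})\ge U_{t'}^{\rm surv}-\varepsilon\ge\fstar-\varepsilon,
\]
so the certificate is correct. Hence the event $\{\exists t':\text{false certificate at }t'\}$ is contained in $B_L\cup G_U^c$, whose probability is at most $\alpha_L+\alpha_U+\delta_U$.

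This statement holds pathwise for all $t'$ simultaneously. It therefore covers every stopping time at once, and no separate optional-stopping argument is needed beyond the Ville step already inside Lemma~\ref{prop:app-preq}. The only delicate point is that Lemma~\ref{prop:app-preq} controls $D$ only intersected with $\mathcal C_{j^\star}$, so the decomposition must split as $\mathcal C^c\cup(\mathcal C\cap D)$ and not bound $\Pr(D)$ directly. The other checks are routine: the lower-side error is interpreted as the simultaneous-over-time event, and the anchor's measurability conditions (predictable envelopes, noise buffer conditional on $\F_{t-1}$) are exactly the hypotheses of Lemma~\ref{lem:app-coverage-audit-null}.
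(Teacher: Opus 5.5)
Your proposal is correct and matches the paper's proof. It uses the same three bad events: lower-side failure, anchor coverage failure, and anchor deletion intersected with coverage, the last bounded via Lemma~\ref{prop:app-preq}. On the complement of their union it runs the same pathwise chain $f(\hat\lambda_{t'})\ge L_{t'}(\hat\lambda_{t'})\ge U_{t'}^{\rm surv}-\varepsilon\ge\fstar-\varepsilon$.
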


\begin{proof}
Let
\[
  \mathcal E_L
  =
  \{\exists t':L_{t'}(\hat\lambda_{t'})
  >f(\hat\lambda_{t'})\},
\]
\[
  \mathcal E_U
  =
  \{\exists t',\exists\lambda\in\Lambda:
  u^f_{j^\star,t'}(\lambda)<f(\lambda)\},
  \qquad
  \mathcal E_D
  =
  \mathcal E_U^c\cap\{\exists t':j^\star\notin A_{t'}\}.
\]
The first two events have probabilities at most $\alpha_L$ and $\alpha_U$.
By Lemmas~\ref{lem:app-coverage-audit-null} and~\ref{prop:app-preq},
$\Pr(\mathcal E_D)\le\delta_U$. On the complement of these events, for every
step $t$, the anchor survives and covers the optimum, so
$U_t^{\rm surv}\ge U_{j^\star,t}\ge\fstar$. If the rule fires, then
\[
  f(\hat\lambda_t)\ge L_t(\hat\lambda_t)
  \ge U_t^{\rm surv}-\varepsilon
  \ge \fstar-\varepsilon .
\]
Therefore the false-certificate event is contained in
$\mathcal E_L\cup\mathcal E_U\cup\mathcal E_D$, and
\[
  \Pr(\exists t':\hbox{ false }\varepsilon\hbox{-certificate at }t')
  \le
  \Pr(\mathcal E_L)+\Pr(\mathcal E_U)+\Pr(\mathcal E_D)
  \le
  \alpha_L+\alpha_U+\delta_U .
\]
\end{proof}

We use one fixed predeclared anchor, so the deletion threshold is
$1/\delta_U$ for every candidate. The number of candidates affects computation
and survivor looseness through the maximum over $A_t$, not the risk budget.
The guarantee uses the maximum over survivors; selecting the tightest survivor
requires separate post-selection accounting.

\section{Post-Selection GP Undercoverage}
\label{app:fit-then-certify-risk}

We show here the undercoverage of fitting then certifying.

\begin{proposition}[False-Stop Reduction]
\label{prop:app-risk-undercoverage}
Let $\tau$ be any stopping time at which the rule fires only if
$L_\tau(\hat\lambda_\tau)\ge U_\tau-\varepsilon$. If the lower side satisfies
\[
  \Pr(\exists t':L_{t'}(\hat\lambda_{t'})>f(\hat\lambda_{t'}))\le\alpha_L,
\]
then
\[
  \Pr(\tau<\infty,\ f(\hat\lambda_\tau)<\fstar-\varepsilon)
  \le
  \alpha_L+\Pr(\tau<\infty,\ U_\tau<\fstar).
\]
\end{proposition}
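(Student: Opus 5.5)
The argument is a decomposition of the false-stop event by set inclusion, followed by a union bound. The structure mirrors the proof of Theorem~\ref{thm:app-worst-surviving}, except that no particular envelope is assumed valid: the upper-side failure is kept explicit as $\{U_\tau<\fstar\}$ instead of being controlled by an anchor.

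First, define the simultaneous lower-side failure event
\[
  \mathcal E_L=\{\exists t':L_{t'}(\hat\lambda_{t'})>f(\hat\lambda_{t'})\},
\]
which by hypothesis has probability at most $\alpha_L$. The hypothesis covers every $t'$ simultaneously, for example via Proposition~\ref{prop:app-selected}. It therefore also covers the random index $t'=\tau$ whenever $\tau<\infty$, so no optional-stopping argument is needed.

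Next, work on $\{\tau<\infty\}\cap\mathcal E_L^c\cap\{U_\tau\ge\fstar\}$. Since the rule fires at $\tau$ only if $L_\tau(\hat\lambda_\tau)\ge U_\tau-\varepsilon$, we get the chain
\[
  f(\hat\lambda_\tau)\ge L_\tau(\hat\lambda_\tau)\ge U_\tau-\varepsilon\ge\fstar-\varepsilon .
\]
So on this event no false certificate occurs. Taking contrapositives,
\[
  \{\tau<\infty,\ f(\hat\lambda_\tau)<\fstar-\varepsilon\}
  \subseteq
  \mathcal E_L\cup\{\tau<\infty,\ U_\tau<\fstar\}.
\]
A union bound then gives the stated inequality.

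The only step needing care is the measurability and interpretation of the event $\{\tau<\infty\}$. On $\{\tau=\infty\}$ the quantities $U_\tau$ and $\hat\lambda_\tau$ are undefined, and nothing is claimed there. I would state explicitly that both events in the bound are intersected with $\{\tau<\infty\}$, so that the firing condition is available wherever it is used.

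The payoff is interpretive rather than technical. Because $U_\tau$ is arbitrary here (for instance, a fit-then-certify envelope), the excess false-stop risk beyond $\alpha_L$ is bounded by upper-side undercoverage at the stopping time. That is exactly the quantity that post-selection fitting inflates and that the survivor rule controls by $\alpha_U+\delta_U$.
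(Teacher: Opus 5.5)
Your proof is correct and matches the paper's. Both intersect with $\{\tau<\infty\}\cap\mathcal E_L^c$, use the firing condition and $f(\hat\lambda_\tau)\ge L_\tau(\hat\lambda_\tau)$ to show that a false stop there forces $U_\tau<\fstar$ (you argue by contrapositive, the paper directly), and finish with the inclusion $\mathcal F\subseteq\mathcal E_L\cup\{\tau<\infty,\ U_\tau<\fstar\}$ and a union bound.
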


\begin{proof}
Let
\[
  \mathcal E_L
  =
  \{\exists t':L_{t'}(\hat\lambda_{t'})
  >f(\hat\lambda_{t'})\}
\]
and
\[
  \mathcal F
  =
  \{\tau<\infty,\ f(\hat\lambda_\tau)<\fstar-\varepsilon\}.
\]
On $\mathcal F\cap\mathcal E_L^c$,
\[
  U_\tau-\varepsilon
  \le L_\tau(\hat\lambda_\tau)
  \le f(\hat\lambda_\tau)
  < \fstar-\varepsilon,
\]
so $U_\tau<\fstar$. Hence
\[
  \mathcal F
  \subseteq
  \mathcal E_L
  \cup
  \{\tau<\infty,\ U_\tau<\fstar\}.
\]
Therefore
\[
  \Pr(\mathcal F)
  \le
  \Pr(\mathcal E_L)+\Pr(\tau<\infty,\ U_\tau<\fstar)
  \le
  \alpha_L+\Pr(\tau<\infty,\ U_\tau<\fstar).
\]
\end{proof}

Thus the lower side contributes only its declared budget. The remaining term is
controlled by the probability that the claimed global upper bound misses the
true global optimum.

\section{Dimension and Resolution}
\label{app:dimension-resolution}

Global certification is limited by fill distance. The upper certificate must
rule out an $\varepsilon$-better point in the unqueried region, so it becomes
informative only when the queried set resolves the function class.

Under a Lipschitz modulus $\omega(r)\le Lr$, excluding a hidden
$\varepsilon$-improvement requires fill distance $h_t\lesssim\varepsilon/L$.
In a $d$-dimensional box, achieving this requires on the order of
$(L\,{\rm diam}(\Lambda)/\varepsilon)^d$ well-placed evaluations. GP/RKHS
envelopes replace this Lipschitz modulus by a kernel uncertainty term, but they
do not remove the ambient-dimension dependence unless the declared structure is
lower-dimensional, additive, or otherwise stronger.

\begin{theorem}[Finite-History Global Nonidentifiability]
\label{thm:app-imposs}
Fix a finite set of queried configurations
$\Lambda_t=\{\lambda_{t'}:t'\le t\}$. Consider any rule that returns
$\hat\lambda_t\in\Lambda_t$. If $\Lambda$ contains a ball disjoint from
$\Lambda_t$, then without a structural smoothness bound no non-vacuous
finite-history rule can certify global $\varepsilon$-optimality uniformly. More
precisely, for any function $f_0$ on which the rule certifies with positive
probability, there exists a function $f_1$ agreeing with $f_0$ at every queried
configuration but satisfying
$f_1^\star>f_1(\hat\lambda_t)+\varepsilon$.
\end{theorem}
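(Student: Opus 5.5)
The construction is a bump-function perturbation of $f_0$ supported inside the unqueried ball; no probabilistic argument is needed beyond fixing one realization. Let $B=B(c,r)\subset\Lambda$ be the ball disjoint from $\Lambda_t$. Because $f_0$ certifies with positive probability, there is at least one realized history, with queried set $\Lambda_t$, returned point $\hat\lambda_t\in\Lambda_t$ and observations $Y_1,\dots,Y_t$, on which the rule issues a certificate. Fix that realization. Since $\hat\lambda_t\in\Lambda_t$ and $\Lambda_t\cap B=\varnothing$, the returned point lies outside $B$.

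Take a continuous bump $\phi:\Lambda\to[0,1]$ with $\phi(c)=1$ and $\phi\equiv0$ outside $B(c,r/2)$; the tent $\phi(\lambda)=\max\{0,1-2\|\lambda-c\|/r\}$ works. Set
\[
  f_1(\lambda)=f_0(\lambda)+h\,\phi(\lambda),
  \qquad h>\varepsilon+f_0(\hat\lambda_t)-f_0(c).
\]
Such an $h$ exists provided $f_0(c)$ is finite, which holds because $f_0$ is real-valued. Then $f_1=f_0$ on $\Lambda\setminus B$, and in particular $f_1(\lambda_{t'})=f_0(\lambda_{t'})$ for every $t'\le t$. This gives $f_1(\hat\lambda_t)=f_0(\hat\lambda_t)$, and so
\[
  f_1^\star\ge f_1(c)=f_0(c)+h>f_0(\hat\lambda_t)+\varepsilon=f_1(\hat\lambda_t)+\varepsilon .
\]
If $f_0$ lies in some class one wants to stay within, such as continuous or bounded functions, the tent preserves that class. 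No smoothness constant is controlled, because $h/r$ may be arbitrarily large. This is exactly the sense of ``without a structural smoothness bound.''

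The main remaining step is to argue that the rule cannot tell $f_0$ from $f_1$. Its inputs are the queried configurations and the noisy scores, and the observation model depends on $f$ only through $f(\lambda_{t'})$. The noise kernels, for example the bounded-score model of Appendix~\ref{app:selected-lower}, depend on the mean at the queried point, and these means agree. The sequential policy is also adapted to $\F_{t-1}$. By induction on $t'$, the joint law of $(\lambda_{t'},Y_{t'})_{t'\le t}$ is therefore identical under $f_0$ and $f_1$, at least on the event that the queries avoid $B$. That event contains the realization we fixed, and an adversary may choose $B$ after seeing $\Lambda_t$, as in the statement.

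It follows that the certifying history has the same positive probability under $f_1$, and the rule issues the same certificate. Under $f_1$ that certificate is false by at least $f_1^\star-f_1(\hat\lambda_t)-\varepsilon>0$. Hence no non-vacuous rule can be uniformly valid over all functions without a structural restriction.

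The delicate point is this indistinguishability step. $\Lambda_t$ is a data-dependent set, while the statement treats it as fixed. I would handle this by conditioning on the realized query set, so that the argument is pathwise. The function $f_1$ then depends on that realization, which suffices for ``for any $f_0$ \dots\ there exists $f_1$.''
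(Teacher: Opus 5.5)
Your proof is correct and follows the paper's own proof. The paper likewise adds a continuous nonnegative bump supported in the unqueried ball, with height chosen so that $f_1(z)>f_0(\hat\lambda_t)+\varepsilon$; this leaves $f_1=f_0$ on $\Lambda_t$ (in particular at $\hat\lambda_t$) and gives $f_1^\star>f_1(\hat\lambda_t)+\varepsilon$, which is everything the stated existence claim requires. Your indistinguishability paragraph goes beyond the paper's one-line remark, and its one imprecision is harmless here: under continuous noise a single fixed realization can have probability zero, so the positive-probability transfer to $f_1$ should be made for an event, such as certification together with avoidance of one fixed ball taken from a countable family, rather than for one history.
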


\begin{proof}
Let $B\subset\Lambda$ be a ball with $B\cap\Lambda_t=\emptyset$, and choose
$z\in B$. Let $b:\Lambda\to[0,\infty)$ be continuous, supported in $B$, and
large enough that
\[
  b(z)>f_0(\hat\lambda_t)-f_0(z)+\varepsilon .
\]
Define $f_1=f_0+b$. Then
\[
  f_1(\lambda_{t'})=f_0(\lambda_{t'})
  \qquad\text{for all }t'\le t,
\]
so the two functions induce the same observed history and the same
certification event. But
\[
  f_1^\star
  \ge
  f_1(z)
  =
  f_0(z)+b(z)
  >
  f_1(\hat\lambda_t)+\varepsilon,
\]
since $b(\hat\lambda_t)=0$. Thus any certificate that fires on this history is
false for $f_1$.
\end{proof}

\section{Experiment Protocol}
\label{app:canonical}

All empirical claims use analytic objectives with known global optima. For each
objective and seed, a sequential optimizer produces an ordered sequence
of observations indexed by $t'$. Within each comparison, every method is
evaluated on the same sequence and
may either abstain or certify the current incumbent at each prefix. We record the first
certification time and score it as correct when
$f(\hat\lambda_t)\ge f^\star-\varepsilon$.

We report
\[
  a=\Pr(\mathrm{certify\ and\ correct}),
  \qquad
  b=\Pr(\mathrm{certify\ and\ wrong}),
\]
written as power/risk $a/b$. The compared rows are fixed before aggregation:
the prequential e-value survivor certificate, a random fixed precommitment
drawn from the same candidate set before the run, and the fit-then-certify
diagnostic that tunes a GP envelope on the same history and then treats it as
fixed. Certificate rows use the same selected-configuration lower bound.
The fixed-precommitment comparison uses the $d=3,4$ rows, whereas the
fit-then-certify comparison uses a separate 512-seed $d=3$ RBF length-scale
stress sweep.

The benchmark grid uses $d\in\{2,3,4\}$, RBF, Mat\'ern, and
rational-quadratic kernel-bump objectives, isotropic and anisotropic length
scales, finite-validation observations
$Y(\lambda)\sim\mathrm{Binomial}(n_{\rm val},f(\lambda))/n_{\rm val}$, and
$n_{\rm val}=1000$, $\varepsilon=0.10$ unless stated otherwise. For plots, the
resolution coordinate is $\rho=64^{-1/d}/\ell_{\rm rms}$; it is used only for
binning objectives.

\end{document}